\definecolor{anti-flashwhite}{rgb}{0.95, 0.95, 0.96}
\definecolor{backcolour}{rgb}{0.95,0.98,0.98}
\definecolor{codegreen}{rgb}{0,0.6,0}
\definecolor{box_background}{HTML}{DFF5FF}
\definecolor{box_margin}{HTML}{333A73}
\definecolor{bestmark}{HTML}{005CE2}
\definecolor{FFF57D}{RGB}{255, 245, 125}
\definecolor{FFFBCB}{RGB}{255, 251, 203}
\definecolor{8AC7FF}{RGB}{138, 199, 255}
\definecolor{D0E9FF}{RGB}{208, 233, 255}
\lstdefinestyle{mystyle}{
    language=python,
    backgroundcolor=\color{anti-flashwhite},
    keywordstyle=\color{magenta},
    commentstyle=\color{codegreen}\textit,
    basicstyle=\ttfamily\footnotesize,
    moredelim=**[s][\bfseries]{def }{:},
    framexleftmargin = 1em,
    breaklines = True,
    tabsize=1,
    numbers=left,
    numberstyle=\footnotesize,     
    captionpos=b,                    
    }
\theoremstyle{plain}
\newtheorem{theorem}{Theorem}[section]
\newtheorem{proposition}[theorem]{Proposition}
\newcommand{\grad}{\nabla}
\newcommand{\boldtheta}{{\bm{\theta}}}
\newcommand{\boldthetahat}{{\hat{\boldtheta}}}
\title{Deeper Understanding of Black-box Predictions \\ via Generalized Influence Functions}
\author{%
    Hyeonsu Lyu
    \quad
    Jonggyu Jang
    \quad
    Sehyun Ryu
    \quad
    Hyun Jong Yang
    \\
    POSTECH, Korea \\
    \texttt{\{hslyu4,jgjang,sh.ryu,hyunyang\}@postech.ac.kr} \\
}
\begin{document}

\maketitle

\begin{abstract}
Influence functions (IFs) elucidate how training data changes model behavior.
However, the increasing size and non-convexity in large-scale models make IFs inaccurate.
We suspect that the fragility comes from the first-order approximation which may cause nuisance changes in parameters irrelevant to the examined data. 
However, simply computing influence from the chosen parameters can be misleading, as it fails to nullify the hidden effects of unselected parameters on the analyzed data.
Thus, our approach introduces generalized IFs, precisely estimating target parameters' influence while nullifying nuisance gradient changes on fixed parameters. 
We identify target update parameters closely associated with the input data by the output- and gradient-based parameter selection methods.
We verify the generalized IFs with various alternatives of IFs on the class removal and label change tasks.
The experiments align with the ``less is more'' philosophy, demonstrating that updating only 5\% of the model produces more accurate results than other influence functions across all tasks.
We believe our proposal works as a foundational tool for optimizing models, conducting data analysis, and enhancing AI interpretability beyond the limitation of IFs.
\end{abstract}

\section{Introduction}

The quote ``\textit{Fear always springs from ignorance} -- \citet{Emerson_fear}" well represents the anxiety of our society on AI.
The requests from worldwide scholars and entrepreneurs to stop developing large language models reflect such growing concern \citep{Anderson2023-PauseAI}.
Technological advancements over a couple of decades have shown the astonishing capabilities of AI, yet the extent to which AI can expand remains unknown.

Influence functions, originating from classical statistics \citep{hampel1974-influence}, have appeared as a breakthrough to address societal concerns.
Influence functions provide explainability and accessibility to the enigmatic black-box models by measuring how the model changes when some training examples are deleted or perturbed \citep{Koh2017_IF, Alejandro20-XAI}.
Diverse tasks are proposed and resolved in the realm of influence functions, including data preprocessing \citep{Lee2020-DataAug_IF, yang2023-dataset}, natural language processing \citep{jain2022-SeqeunceTagging, ye2022-progen}, post-hoc processing \citep{kong2022-Relabeling}, and model attacks \citep{cohen2022-MIA, Koh2022-DataPoisoning}.
However, their deficiencies have been consistently reported when applied to large-scale models or large data groups.

\citet{Koh2019_GIF, Basu2020_GIF} revealed that influence functions become increasingly inaccurate for large data groups, as the estimation error scales quadratically with the size of the removed data.
Concurrently, \citet{Basu2021-fragile} observed found that an increase in the parameter count correlates with reduced accuracy of influence functions.
However, there are no thorough explanations for why the number of parameters affects the accuracy to the best of our knowledge.
\vspace{10pt}

A suspicious candidate is the estimation error.
Influence functions adopt first-order Taylor approximation to measure the change of the \textit{entire} parameters.
However, not all parameters are involved with a given input data \citep{olah2018-interpretability, Bau2020-Understanding},
so nuisance changes from approximation errors could accumulate in the irrelevant parameters and cause lossy updates.
This hypothesis explains why influence functions become inaccurate as the network size increases.
Metaphorically speaking, conducting network-wide sweeps through influence functions could be similar to removing a brain segment to treat an ankle sprain.

\vspace{-0.2cm}
\paragraph{Our approach.} We verify our conjecture by proposing a generalized influence function (GIF), which measures the model changes only in the parameters closely associated with the input data.
We establish parameter selection criteria to efficiently distinguish between relevant and irrelevant parameters during feed-forward and back-propagation, inspired by the lottery-ticket hypothesis of the network pruning \citep{frankle2018-lottery_pruning, Wang2020-WinningTicketsPruning}.
Figure~\ref{fig:overview} illustrates how the GIF computes the data influence compared to the original influence functions.
We confirm that using irrelevant parameters can harm model performance, and identify winning tickets that closely maintain the original model's capabilities.

We also find a remarkable property of the GIF, which makes the LiSSA algorithm, a fast approximation of influence functions, to \textit{always converge in any network}, maintaining the complexity as linear.
The LiSSA algorithm is the key method to compute influence functions within feasible cost \citep{Agarwal2017-HessianEstimator, Koh2017_IF}.
However, the algorithm frequently diverges if the model is non-convex \citep{Basu2021-fragile, Epifano2023-fragile2}, enforcing the loss function to have a $L_2$ regularization \citep{Koh2019_GIF, grosse2023-GaussNewtonHessianInfluence}.\footnote{$L_2$ regularization with weight $\lambda$ makes loss functions convex by shifting the eigenvalue of Hessian by $\lambda$.}

Benchmarks with the five contemporary methods in small-scale models show that the GIFs can provide more accurate inference results in the class removal and label change tasks.
Remarkably, updating only 5\% parameters via GIF can tightly maintain the test accuracy of the original model while almost perfectly deleting or changing a large portion of data.
Inference-time analysis including model distributions and discriminative region visualization \citep{Ramprasaath2016-GradCam} reveals that updating only a small portion of parameters can robustly replicate the behavior of a network retrained from scratch without the removed data points.

\begin{figure}[t]
    \centering
    \includegraphics[width=\columnwidth]{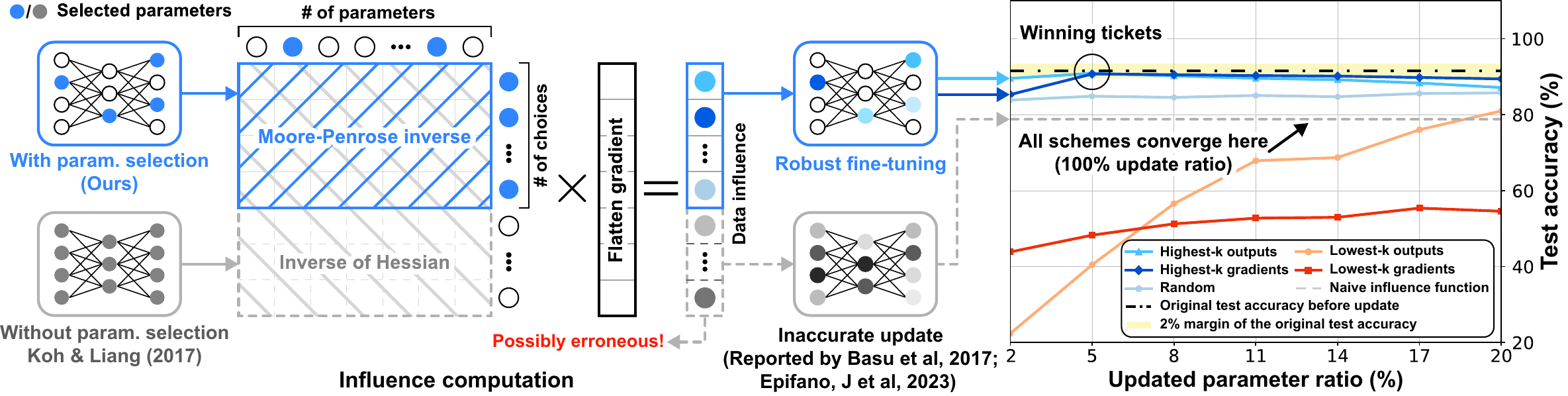}
    \caption{
    Overview of our approach and the original influence functions with the test accuracy per updated parameter ratio for various parameter selection schemes.
    Both \citet{Koh2017_IF} and our approach linearly transform the gradient of examined data,
    but our method negates changes in irrelevant parameters by projecting the gradient into the space of selected parameters.
    }
    \label{fig:overview}
    \vspace{-0.5cm}
\end{figure}

\section{Preliminaries: What is Influence Function?}
\label{sec:Preliminaries: What is Influence Function?}
\paragraph{Notations and definitions.}
The remainder of the paper shares the following definitions and notations for consistency. 
We consider a model $\boldtheta$ of size $n$ in the parameter space $\Theta \subset \mathbb{R}^n $ and $m$ training examples $z_1,...,z_m \in Z$.

An empirical risk is defined as $\mathcal{L}(\boldtheta) = \frac{1}{m} \sum_{i=1}^m \ell(z_i,\boldtheta)$ for some loss function $\ell(\cdot)$.
The empirical risk minimizer $\boldthetahat$ is accordingly defined as $\boldthetahat = \mathrm{argmin}_{\boldtheta\in \Theta}\mathcal{L}(\boldtheta)$.
Additionally, An $\epsilon$-upweighted empirical risk minimizer is defined as
\begin{align}
    \boldthetahat(\epsilon, \bm{w}) = \mathrm{argmin}_{\boldtheta \in \Theta} \mathcal{L}(\boldtheta) + \epsilon\sum_{i=1}^m w_i\ell(z_i,\boldtheta),
    \label{eq:upweighted empirical risk}
\end{align}
where $\bm{w} = [w_1, ..., w_m]^\mathsf{T} \in \{0, 1\}^m$ indicates whether each training example is upweighted or not.
The influence of a single data point can be measured by setting $\bm{w}$ as a one-hot vector.

\paragraph{Original influence function.}
\citet{Koh2017_IF} introduced an influence function, which measures the parameter change when the empirical risk changes infinitesimally for given data.
The influence function $\mathcal{I}(\bm{w}, \boldthetahat)$ computes the change direction from $\boldthetahat$ to $\boldthetahat(\epsilon,\bm{w})$.

Influence functions can be derived as follows (see, e.g., \citet{Koh2017_IF} for details).
By applying Taylor expansion with $\boldthetahat$ as an anchor, 
the first-order optimality condition of the $\epsilon$-upweight risk at $\boldthetahat(\epsilon,\bm{w})$ can be linearly approximated as
\begin{align}
     \hspace{-0.1cm}
     0 \approx \epsilon \sum_{i=1}^m w_i\grad\ell\big(z_i,\boldthetahat\big) + \mathbf{H}\big(\boldthetahat(\epsilon,\bm{w})-\boldthetahat\big)
     ~\Longleftrightarrow~
     \boldthetahat(\epsilon,\bm{w})-\boldthetahat \approx
     -\mathbf{H}^{-1} \left[\sum_{i=1}^m w_i\grad\ell\big(z_i,\boldthetahat\big) \right]\epsilon,
    \label{eq:taylor_derivation_influence}
\end{align}
where Hessian $\mathbf{H}$ is equal to $[\bm{h}_1 | \cdots | \bm{h}_n] = \sum_{i=1}^m(1+\epsilon w_i)\grad^2\ell\big(z_i, \boldthetahat\big) \in \mathbb{R}^{n\times n}$.
From \eqref{eq:taylor_derivation_influence}, influence function can be defined as 
\begin{align}
    \mathcal{I}(\bm{w}, \boldthetahat) 
    \mathrel{\mathop:}=
    \left. \frac{d \big( \boldthetahat(\epsilon, \bm{w})-\boldthetahat \big)}{d\epsilon} \right|_{\epsilon=0} 
    = -\mathbf{H}^{-1} \left[\sum_{i=1}^m w_i\grad\ell\big(z_i,\boldthetahat\big) \right].
    \label{eq:original influence function}
\end{align}

\paragraph{LiSSA iteration: fast influence approximiation}
\label{Sec:LiSSA approximation}
\citet{Koh2017_IF, Basu2020_GIF} propose an iterative method that can approximately compute inverse-Hessian-vector product by using the Neuman series \citep{stewart1998-matrix}, Hessian-vector product \citep{Pearlmutter1994-HVP}, and the LiSSA algorithm \citep{Agarwal2017-HessianEstimator}. 
The iterative approximation can be represented as
\begin{align}
    \mathcal{I}_k = 
    \mathcal{I}_0 + (\mathbf{I}-\mathbf{H}_\boldthetahat) \mathcal{I}_{k-1},
    \label{eq:approximate_series_IF}
\end{align}
where $\mathcal{I}_0 = \grad \ell \big(z,\boldthetahat \big)$ and
$\mathbf{H}_\boldthetahat$ is estimated from uniformly chosen $t$ data samples $z_{s_1}, ..., z_{s_t}$.

This recursive series converges as $\mathcal{I}_k \rightarrow \mathcal{I}(z,\boldthetahat)$ as $k \rightarrow \infty$ only if $\rho(\mathbf{I}-\mathbf{H}_\boldthetahat) < 1$ for spectral radius $\rho(\cdot)$.\footnote{The spectral radius of matrix $\mathbf{A}\in \mathbb{R}^{n\times n}$ is defined as $\rho(\mathbf{A})=\max \{ |\lambda_1|, ..., |\lambda_n|\}$, where $\lambda_1,...,\lambda_n$ are the eigenvalues of $\mathbf{A}$.}
In other words, all eigenvalues of the estimated Hessian must be within $[0,~2)$, which is hardly the case.
Thus, \citet{Koh2019_GIF, Basu2020_GIF} apply $L
_2$-regularization and scale the loss down to avoid the divergence of the series.
Nonetheless, $L_2$-regularization is not a silver bullet for the convergence of the series \eqref{eq:approximate_series_IF} because there is no way to pre-determine proper regularization weight to make the empirical risk convex for a given model (Appendix~\ref{Appendix:LiSSA iteration}).

\subsection{Popular Usage Cases}
\label{subsec:Popular Usage Cases}
\paragraph{Data removal (G. \citeauthor{Wu22-PUMA} \citeyear{Wu22-PUMA}; J. \citeauthor{Wu23-GraphUnlearning} \citeyear{Wu23-GraphUnlearning}).} 
Taking $\epsilon=-\nicefrac{1}{m}$ can remove the influence of data $z_i$ corresponding to $w_i=1$ from the model as $\epsilon=-\nicefrac{1}{m}$ cancels $\ell(z_i, \boldthetahat)$ from upweighted the empirical risk \eqref{eq:upweighted empirical risk}.
That is, we can approximate $\boldthetahat(\epsilon, \bm{w})$ by updating the parameter as
$\boldthetahat \leftarrow \boldthetahat -\frac{1}{m}\mathcal{I}(\bm{w},\boldthetahat)$ according to \eqref{eq:taylor_derivation_influence}.

\paragraph{Data perturbation \citep{kong2022-Relabeling, Gong2022-LabelInfluence, Lee2020-DataAug_IF}.} 
Influence functions can measure the approximated parameter changes from $\boldthetahat$ to the model retrained with $Z'$ when the training dataset changes from $Z$ to $Z'$.
Suppose that binary vector $\bm{w}\in\mathbb{R}^m$ indicates $w_i=1$ if $z_i\neq z'_i$ for $z_i\in Z,~z'_i\in Z'$, otherwise $w_i=0$.
Then, another influence function $\mathcal{I}_{Z'}(\bm{w}, \boldthetahat) = -\mathbf{H}^{-1} \big[\sum_{i=1}^m w_i\grad\ell\big(z'_i,\boldthetahat\big) \big]$ can be defined.
Similar to \textbf{Data removal}, the parameter can be updated as $\boldthetahat\leftarrow \boldthetahat - \frac{1}{m}\mathcal{I}(\bm{w},\boldthetahat) + \frac{1}{m}\mathcal{I}_{Z'}(\bm{w},\boldthetahat)$ \citep{Koh2017_IF}.

\paragraph{Data scoring \citep{yang2023-dataset, schioppa2022-scalingIF, guo2021-fastif}.} 
The influence of data for given $\bm{w}$ on the test data $z_{\mathrm{test}}$ is defined as $\mathcal{I}(z_{\mathrm{test}}, \bm{w}, \boldthetahat)=\langle \grad\ell(z,\boldtheta), \mathcal{I}(\bm{w}, \boldthetahat)\rangle$ \citep{Koh2017_IF}.
This metric is adopted by widespread applications including data sampling, data influence scoring, and mislabel detection tasks.

\section{Proposed Method: Generalized Influence Function}
\vspace{-5pt}

The driving questions behind our approach are:
\begin{tcolorbox}[colframe=box_margin, colback=box_background, height=1.1cm, boxrule=0.4mm]
\vspace{-0.125cm}
\begin{center}
    \textit{\textbf{How can we measure the influence of data only for the selected relevant parameters?}} \\
    \textit{\textbf{How can we nullify changes in irrelevant parameters?}} 
\end{center}
\end{tcolorbox}
\vspace{7pt}

\begin{wrapfigure}{r}{0.5\textwidth}
    \vspace{-0.6cm}
    \begin{center}
        \includegraphics[width=.5\textwidth]{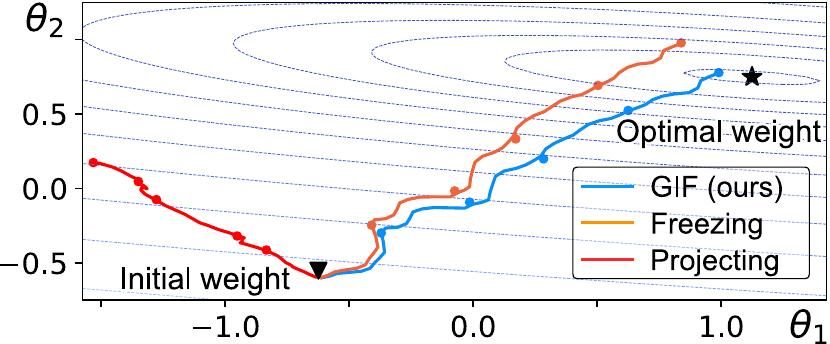}
    \end{center}
    \vspace{-0.3cm}
    \caption{Visualization of the weight updates via three influence functions.
    The contour represents the level curve of the loss.
    The circular markers represent every 10th removal of four data points.
    The optimal weight indicates $(\theta_1, \theta_2)$ that provides the least loss when freezing the other parameters.
    \label{fig:toy example}
    }
    \vspace{-0.4cm}
\end{wrapfigure}

Unfortunately, the original influence function \eqref{eq:original influence function} cannot be an answer as it can only measure the holistic model changes.
We instead answer the question by separating parameters into i) \textbf{target parameters} for measuring the influence, and ii) \textbf{fixed parameters} to be unchanged.

Fig.~\ref{fig:toy example} indicates that the original influence function fails to minimize the mean-square error of the $\mathbb{R}^6 \mapsto \mathbb{R}$ linear regression problem in the data removal task.
The influence of $(\theta_1, \theta_2)$ can be measured by \textbf{freezing} the unselected parameters or naively \textbf{projecting} the original influence function with the indices of selected parameters.
However, these methods overlook the subliminal effect of the fixed parameters, resulting in erroneous updates.
In-depth explanations about the freezing and projecting are provided in Sec.~\ref{sec:numerical implementations} and Appendix~\ref{Appendix:toy example}.

\paragraph{Definition.}
Let us denote each element of parameter $\boldtheta$ as $\theta_1,...,\theta_n$ where $\boldtheta=[\theta_1, ..., \theta_n]^\mathsf{T}$.
By using the index set $J=\{j_1, ...,  j_k\}$ with $k$ indices, we define a sub-parameter $\boldtheta_J=[\theta_{j_1}, ...,\theta_{j_k}]^\mathsf{T} \in \mathbb{R}^k$ and $\boldtheta_{-J} = [\theta_j]_{j\notin J} \in \mathbb{R}^{n-k}$, which indicate the target parameters and fixed parameters, respectively.
For brevity,  we rearrange all terms in \eqref{eq:taylor_derivation_influence} as
$\boldthetahat=[\boldthetahat_J, \boldthetahat_{-J}]^\mathsf{T}$ and $\mathbf{H} = [\mathbf{H}_J | \mathbf{H}_{-J}]$ for $\mathbf{H}_J = [\bm{h}_j]_{j\in J}$ and $\mathbf{H}_{-J} = [\bm{h}_i]_{i\notin J}$.

We assume that parameters in $J$ are stationary, which implies that the model changes from $\boldthetahat=[\boldthetahat_J, \boldthetahat_{-J}]^\mathsf{T}$ to $\boldthetahat(\epsilon,\bm{w})=[\boldthetahat_J(\epsilon, \bm{w}), \boldthetahat_{-J}]^\mathsf{T}$.
Then, \eqref{eq:taylor_derivation_influence} is represented as
\begin{align}
     0 \approx \epsilon \sum_{i=1}^m w_i\grad\ell\big(z,\boldthetahat\big) +[\mathbf{H}_J~|~\mathbf{H}_{-J} ]
     \left(
     \begin{bmatrix} \boldthetahat_J(\epsilon, \bm{w}) \\ \boldthetahat_{-J} \end{bmatrix} -
     \begin{bmatrix} \boldthetahat_J \\ \boldthetahat_{-J} \end{bmatrix}
     \right)
     = \epsilon \grad\ell\big(z,\boldthetahat\big) + \mathbf{H}_J~\Delta_\epsilon,
     \label{eq:taylor_derivation_generalized_influence}
\end{align}
for $\Delta_\epsilon = \boldthetahat_J(\epsilon, \bm{w})-\boldthetahat_J \in \mathbb{R}^k$ as the $\boldthetahat_{-J}$ terms cancel each other, not affecting the equation.

Finding $\Delta_\epsilon$ in \eqref{eq:taylor_derivation_generalized_influence} is an overdetermined problem where the Moore-Penrose inverse provides mathematical optimum as in the case of the least-square method \citep{ben2003-MoorePenroseInverse}.
Then, the influence of data point $z$ on $\boldthetahat_J$ is defined as (See Appendix \ref{Appendix:Derivation of GIF} for a detailed derivation)
\begin{align}
    \mathcal{I}\big( \bm{w}, \boldthetahat_J|\boldthetahat \big) 
    = \frac{d\Delta_\epsilon}{d\epsilon}
    = \big( \mathbf{H}_J^{\mathsf{T}} \mathbf{H}_J \big)^{-1} \mathbf{H}_J^{\mathsf{T}}
    \left[ {\sum_{i=1}^m w_i \grad\ell\big( z_i,\boldthetahat \big)} \right],
    \label{def:GIF}
\end{align}

\paragraph{Special case.} The original influence function \eqref{eq:original influence function} can be reproduced from \eqref{def:GIF} by setting $\boldthetahat_J=\boldthetahat$ with $J=\{1,...,n\}$. 
Then, sub-matrix $\mathbf{H}_J=\mathbf{H}$ becomes full-rank, which corresponds to
\begin{align}
    \mathcal{I}\big(z, \boldthetahat | \boldthetahat \big) =
    -\mathbf{H}^{-1} \grad\ell\big(z,\boldthetahat\big) =
    \mathcal{I}\big( z, \boldthetahat \big).
    \label{eq:GIF_special_case}
\end{align}

\paragraph{Upweight effect in data removal.} The $\epsilon$ term could have a substantial effect when the proportion of the upweighted data is large \citep{Koh2019_GIF, Basu2020_GIF},
though the effect is negligible when calculating the influence of an individual data point \citep{Koh2017_IF}.
For example, when we take $\epsilon=-\frac{1}{m}$ to remove data group from the model, then Hessian $\mathbf{H}|_{\epsilon=-\nicefrac{1}{m}}=\frac{1}{m}\sum_{i=1}^m (1-w_i)\grad^2\ell(z_i,\boldthetahat)$ excludes the loss from upweighted data.\footnote{We omit the notation of $\epsilon$ in $\mathbf{H}_{J}$ and $\mathbf{H}$ for the brevity of notations.}
This aligns with the observation by \citet{Koh2019_GIF}, indicating that $\mathbf{H}|_{\epsilon=-\nicefrac{1}{m}}$, referred to as Newton approximation, provides a more accurate estimation compared to the Hessian of original influence function 
$\mathbf{H}|_{\epsilon=0}$.
\subsection{Modified LiSSA: Toward Fast and Guaranteed Convergence}
\label{sec:Numerical Approximation of GIF}

\paragraph{Definition.} We tailor a modified LiSSA which does not require any specific loss function, especially $L_2$ regularization.
The series \eqref{eq:approximate_series_IF} can be extended to the case of GIF as 
\begin{align}
   \mathcal{I}_{k} = 
   \mathcal{I}_0 + (\mathbf{I} - \mathbf{H}_J^\mathsf{T} \mathbf{H}_J) \mathcal{I}_{k-1}, \quad \mathcal{I}_0 = \mathbf{H}_J^\mathsf{T} \left[ {\sum_{i=1}^m w_i \grad\ell\big( z_i,\boldthetahat \big)} \right].
   \label{eq:approximate_series_GIF}
\end{align}
Then, the series converges to the GIF $\mathcal{I}\big(  z, \boldthetahat_J | \boldthetahat  \big)$ as $k\rightarrow\infty$ only if $\rho(\mathbf{I}-\mathbf{H}_J^\mathsf{T}\mathbf{H}_J) < 1$.
As in \eqref{eq:GIF_special_case}, taking $J$ as full parameter index $J=\{1,...,n\}$ makes the above series converge to the original influence function \eqref{eq:original influence function}.
Appendix~\ref{Appendix:LiSSA iteration} describes detailed derivations.

\paragraph{Guaranteed convergence.}
We find a simple, straightforward method to guarantee the iteration converging.
Certainly, the series itself might diverge, yet the proposition below ensures that there is always a loss and empirical risk towards which the series \eqref{eq:approximate_series_GIF} converges stably.
\begin{proposition}
For any $\ell(\cdot)$ and $\mathcal{L}(\cdot)$, the following statements hold: \\
    i) The GIF is scale-invariant. That is, scalar multiplication on the loss $\ell(\cdot)$ and empirical risk $\mathcal{L}(\cdot)$ does not change $\mathcal{I}(\bm{w}, \boldthetahat_J|\boldthetahat)$. \\
    ii) There always exists $M>0$ that makes the series \eqref{eq:approximate_series_GIF} for the scaled loss $\frac{1}{M}\ell(\cdot)$ and empirical risk $\frac{1}{M}\mathcal{L}(\cdot)$ converge.
    \label{Proposition 1:GIF approximation}
\end{proposition}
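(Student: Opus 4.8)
The plan is to handle the two claims separately: (i) is a direct algebraic cancellation, while (ii) rests on a spectral property of the Gram-type matrix $\mathbf{H}_J^\mathsf{T}\mathbf{H}_J$ that holds for every loss.

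For part (i), I would first observe that scaling the loss by any constant $c>0$ leaves the empirical risk minimizer $\boldthetahat$ unchanged, since $\mathrm{argmin}$ is invariant under positive rescaling; hence the Taylor anchor, and with it the block structure $\mathbf{H}_J$, is computed at the same point. It then suffices to track the two building blocks of \eqref{def:GIF}: the summed gradient $\sum_{i} w_i\grad\ell(z_i,\boldthetahat)$ scales by $c$, and the Hessian block $\mathbf{H}_J$ scales by $c$ as well. Substituting $c\mathbf{H}_J$ and $c\sum_i w_i\grad\ell$ into \eqref{def:GIF} contributes a factor $c^{-2}$ from $(\mathbf{H}_J^\mathsf{T}\mathbf{H}_J)^{-1}$ against a factor $c\cdot c=c^2$ from the remaining $\mathbf{H}_J^\mathsf{T}$ and gradient terms; the two cancel exactly, leaving $\mathcal{I}(\bm{w},\boldthetahat_J|\boldthetahat)$ untouched.

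For part (ii), the key structural fact is that $\mathbf{H}_J^\mathsf{T}\mathbf{H}_J$ is symmetric positive semidefinite for \emph{any} loss, simply because it has the form $A^\mathsf{T} A$; under the invertibility assumed in \eqref{def:GIF} it is in fact positive definite. Let $0<\lambda_1\le\cdots\le\lambda_k$ be its eigenvalues. Replacing $\ell,\mathcal{L}$ by $\tfrac{1}{M}\ell,\tfrac{1}{M}\mathcal{L}$ rescales $\mathbf{H}_J$ to $\tfrac{1}{M}\mathbf{H}_J$, so the iteration matrix of \eqref{eq:approximate_series_GIF} becomes $\mathbf{I}-\tfrac{1}{M^2}\mathbf{H}_J^\mathsf{T}\mathbf{H}_J$, a symmetric matrix with eigenvalues $1-\lambda_i/M^2$. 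Because it is symmetric, its spectral radius equals $\max_i |1-\lambda_i/M^2|$, so the convergence criterion $\rho(\cdot)<1$ reduces to the eigenvalue-wise requirement $0<\lambda_i/M^2<2$ for all $i$. The lower bound is automatic since every $\lambda_i>0$, and the upper bound holds for any $M>\sqrt{\lambda_{\max}/2}$; choosing such an $M$ finishes the proof.

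I expect the main conceptual point --- rather than a calculational obstacle --- to be articulating why this scaling always succeeds for GIF but not for the original LiSSA. There the scaled iteration matrix is $\mathbf{I}-\tfrac{1}{M}\mathbf{H}$, and a negative eigenvalue $\mu<0$ of a non-convex Hessian yields $1-\mu/M=1+|\mu|/M>1$ for every $M>0$, so no rescaling can force $\rho(\cdot)<1$. The Gram structure $\mathbf{H}_J^\mathsf{T}\mathbf{H}_J$ removes exactly this failure mode by forbidding negative eigenvalues, which is what makes part (ii) hold unconditionally; the only mild care needed is in invoking symmetry to pass from $\rho(\cdot)<1$ to the per-eigenvalue condition.
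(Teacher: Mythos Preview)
Your proposal is correct and follows essentially the same approach as the paper: part~(i) is the same cancellation of scaling factors in the pseudoinverse formula, and part~(ii) hinges on the same observation that $\mathbf{H}_J^\mathsf{T}\mathbf{H}_J$ is positive definite so that scaling can push all eigenvalues of the iteration matrix into $(-1,1)$. Your eigenvalue bookkeeping is in fact a bit more careful than the paper's---you give the sharp threshold $M>\sqrt{\lambda_{\max}/2}$ where the paper picks the particular value $M=\sqrt{\rho(\mathbf{H}_J^\mathsf{T}\mathbf{H}_J)+1}$---and your closing paragraph contrasting with the unscaled LiSSA is helpful exposition not present in the paper's proof, but none of this constitutes a different route.
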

\begin{proof}
See Appendix \ref{Appendix:Proof of Prop. 1}.
\end{proof}
\vspace{-.3cm}

There is a convenient strategy to find proper $M$ instead of theoretically determining $M$.
We just execute the modified LiSSA iteration \eqref{eq:approximate_series_GIF}. 
If the series diverges, we restart the iteration after dividing the loss $\ell(\cdot)$ and empirical risk $\mathcal{L}(\cdot)$ by some decaying factor $M$.
This strategy is widespread in the field of second-order optimization such as the Levenberg-Marquardt algorithm \citep{Transtrum11-Levenberg–Marquardt-algorithm} and the backtracking line-search technique \citep{Dimitri16-nonlinear_programming}.

We can avoid the direct $\mathcal{O}(n^3)$ computation of  $\mathbf{H}_J^\mathsf{T}\mathbf{H}_J$ by cleverly applying HVP.
Note that $\mathbf{H}_J^\mathsf{T}\mathbf{H}_J$ is the $k \times k$ leading principal submatrix of $\mathbf{H}^\mathsf{T}\mathbf{H}$, so the following equality holds for any $v\in \mathbb{R}^k$:
\begin{align}
    \hspace{-8pt}
    \mathbf{H}^\mathsf{T}\mathbf{H}
    \begin{bmatrix}
        v \vspace{.1cm} \\ \mathbf{0}
    \end{bmatrix}
    = 
    \begin{bmatrix}
        \mathbf{H}_J^\mathsf{T} \vspace{.1cm} \\ \mathbf{H}_{-J}^\mathsf{T}
    \end{bmatrix}
    [\mathbf{H}_J | \mathbf{H}_{-J}]
    \begin{bmatrix}
        v \vspace{.1cm} \\ \mathbf{0}
    \end{bmatrix}
    =\begin{bmatrix}
       \mathbf{H}_J^\mathsf{T}\mathbf{H}_Jv \vspace{.2cm} \\ \mathbf{H}_J^\mathsf{T}\mathbf{H}_{-J}v
    \end{bmatrix}.
    \label{eq:HVP_twice}
\end{align}
Therefore, the series in Prop.~\ref{Proposition 1:GIF approximation} can be computed by applying the HVP twice, which takes the same complexity $\mathcal{O}(n)$ as the original IF.

\section{Identifying Suitable Target Parameters}
\label{sec:param_sel}
Which parameters should we modify to remove or change trained data points?
The primary goal of the GIF is to precisely estimate significant changes in model behavior when multiple data points are deleted or changed from the model.
To do so, classifying highly relevant parameter $\boldtheta_J$ is crucial, as we guess that nuisance changes from influence functions might accumulate in irrelevant parameters.
Unfortunately, computing the GIF for every possible $J$ is infeasible even for a simple network.
The number of combinations for choosing 100 parameters out of 1000 parameters ($\approx 10^{140}$) outstrips the number of atoms in the universe ($\approx 10^{82}$).

We propose empirical parameter selection methods to detour the combinatorial explosion, inspired by the neurological approach that analyzes how the specific areas of the brain respond to certain tasks or stimuli \citep{penny2011-statistical}.
We are also motivated by network pruning and quantization, which remove the least important connections to compress the network \cite{Hu16-trimming, lee2018-snip}.

\paragraph{Parameter selection criteria.} We design two criteria, so-called \textbf{Highest-$k$ outputs} and \textbf{Highest-$k$ gradients} that select $k$\% of parameters that produce the highest outputs and gradients for input data, respectively.
Layer outputs from parameters irrelevant to input data might be filtered out by the activation functions, so the corresponding outputs and gradients become negligible.
Conversely, we assume that these selection criteria can select valid parameters for a given data input.

These criteria are compared with the \textbf{Lowest-$k$ outputs} and \textbf{Lowest-$k$ gradients}, which pick $k$\% of parameters that generate the lowest outputs and gradients, respectively; and \textbf{Random} that uniformly selects $k$\% of parameters.

\begin{wrapfigure}{r}{0.581\textwidth}
\vspace{-0.9cm}
\begin{minipage}{0.581\textwidth}
    \begin{algorithm}[H]
        \caption{Generalized Influence Function}
        \label{alg:GIF}
        \begin{algorithmic}[1]
        \vspace{-0.07cm}
        \STATE {\bfseries Input:} $\boldthetahat$, $\bm{w}\in\{1,0\}^m$, $k=1$, $k_{\text{max}}$, $\mu$.
        \STATE $\mathcal{L}(\boldthetahat) \leftarrow \sum_{i=1}^m \ell(z_i,\boldthetahat)$,~
        $\mathcal{L}_{\bm{w}}(\boldthetahat) \leftarrow \sum_{i=1}^m w_i\ell(z_i,\boldthetahat)$. \vspace{-.3cm}
        \STATE Determine $J$ using one of the selection criteria.
        \REPEAT
            \STATE $\mathcal{I}_{k} \leftarrow 
       \mathcal{I}_0 + (\mathbf{I} - \mathbf{H}_J^\mathsf{T} \mathbf{H}_J) \mathcal{I}_{k-1}$ 
       ~\textcolor{ForestGreen}{\textit{\# By using \eqref{eq:HVP_twice}.}}
            \IF{$\mathcal{I}_k$ is diverging,}
                \STATE $\mathcal{L}(\boldthetahat) \leftarrow \mathcal{L}(\boldthetahat)/\mu$, $\mathcal{L}_{\bm{w}}(\boldthetahat) \leftarrow \mathcal{L}_{\bm{w}}(\boldthetahat)/\mu$.
                \STATE Restart the loop from scratch with $k\leftarrow1$.
            \ENDIF
            \STATE $k \leftarrow k+1$.
        \UNTIL{$\|\mathcal{I}_k-\mathcal{I}_{k-1}\| \geq \epsilon$ or $k \neq k_{\text{max}}$}.
        \vspace{-0.1cm}
    \end{algorithmic}
    \end{algorithm}
    \vspace{-.45cm}
    \textcolor{bestmark}{* Pythonic pseudo-code is provided in Appendix~\ref{Appendix:python_algorithm}.}
\end{minipage}
\vspace{-0.8cm}
\end{wrapfigure}

The number of selected parameters $k$ is determined proportional to the number of parameters for each layer.
Without having an additional selection process, these methods can be implemented as callback functions that are executed when each layer is processed during the feed-forward or back-propagation.

The GIF algorithm can be devised by combining the numerical series \eqref{eq:approximate_series_GIF} and the parameter selection method, as in Alg.~\ref{alg:GIF}.
The GIF for a single data point can be computed by defining the indicator vector $\bm{w}$ as a one-hot vector.
Once $\mathcal{L}(\boldthetahat)$, $\mathcal{L}_{\bm{w}}(\boldthetahat)$ and $J$ are given, the series \eqref{eq:approximate_series_GIF} can be computed by using the hessian vector-product.

\section{Findings Through Experiments}
\label{sec:numerical implementations}

Inspired by \citet{Koh2019_GIF}, we evaluate the GIF in the data removal and label change tasks by the below metrics to check whether the updated model genuinely tracks the model retrained from scratch.
We update the model as described in Sec.~\ref{subsec:Popular Usage Cases}.
A detailed explanation of the update is also provided in Appendix~\ref{Appendix:Differences between theoretic and empirical update strategies}.
\vspace{-0.2cm}
\begin{itemize}[leftmargin=.8cm]
    \item \textbf{Test loss:} 
    The loss on the test dataset $Z_{\mathrm{test}}$, denoted as $\sum_{z \in Z_{\mathrm{test}}} \ell(z, \boldthetahat)$, measures the effect of removing or changing a data subset on the test prediction.
    \item \textbf{Test accuracy:} The accuracy on the test dataset $Z_{\mathrm{test}}$, similar to the test loss. 
    \item \textbf{Self-loss:} The loss of the removed data, denoted as $\sum_{i=1}^m w_i\ell(z_i, \boldthetahat)$.
    Comparison with the model retrained without $z_i\in Z$ for $w_i=1$ provides how accurately the model is updated.
    \item \textbf{self-accuracy:} The accuracy on the removed data, akin to the self-loss.
\end{itemize}
We conduct all experiments at NVIDIA Geforce RTX 3080Ti with 12GB VRAM.

\subsection{Lottery Ticket Hypothesis in Influence Functions}
\label{subsec:Losing ticket hypothesis}
We have proposed a conjecture that erroneous updates in the irrelevant parameters may degrade the model performance.
The five selection criteria above are evaluated on the VGG-11 network trained with the CIFAR-10 dataset to verify the guess.
We first measure the influence of images labeled as "0" by Alg.~\ref{alg:GIF}, then remove the trained images by repetitively adding the influence to the model with update rate $\gamma=0.06$ until the self-accuracy becomes lower than 0.4\%.

\begin{wrapfigure}{r}{0.6\textwidth}
\begin{minipage}{0.6\textwidth}
        \centering
        \vspace{-0.45cm}
        \includegraphics[width=1\linewidth]{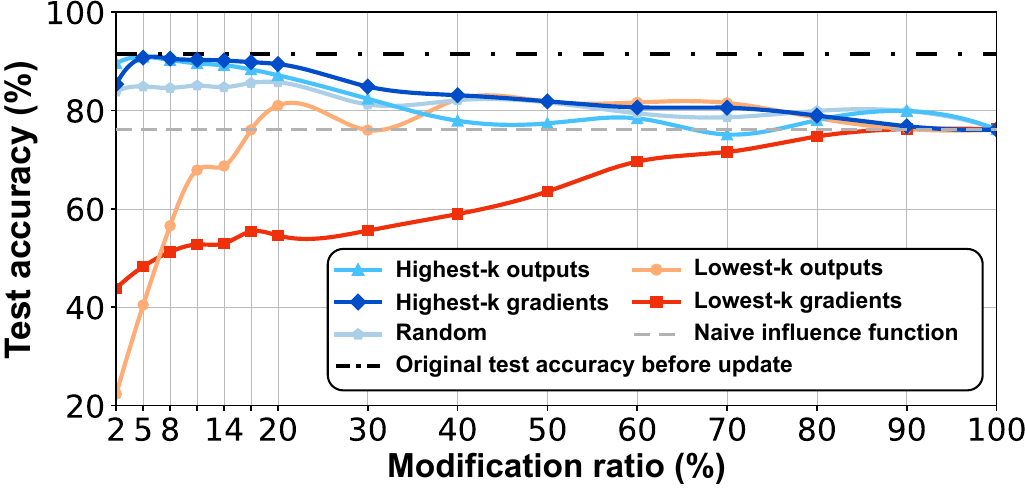}
        \vspace{-0.4cm}
        \caption{Test accuracy evaluation for various parameter selection criteria. The modification ratio indicates the ratio of parameters updated.}
        \label{fig:selection criteria}
        \vspace{-0.5cm}
\end{minipage}   
\end{wrapfigure}

\paragraph{Results.} 
Figure~\ref{fig:selection criteria} illustrates that two \textbf{Highest-$\bm{k}$} criteria outperform the baselines, which verifies the existence of winning tickets in the parameter combination as in the case of network pruning \citep{frankle2018-lottery_pruning}.
\textbf{Highest-$\bm{k}$ outputs} and \textbf{gradients} show 91.47\% and 91.50\% test accuracies with the 91.50\% of original test-accuracy.

Also, two \textbf{Lowest-$\bm{k}$ criteria} imply that the fragility of the influence function comes from the excessive modifications of irrelevant parameters.
Then, \textbf{naive influence functions} \citep{Koh2017_IF} provide inferior accuracy (76.13\%) compared to the accuracy of \textbf{Random} criteria (75-85\%).

\subsection{Performance Evaluation on Class Removal Task}
We evaluate the losses and accuracies of the GIF, extending the linear regression with synthetic data points in Fig.~\ref{fig:toy example}.
We conduct a class removal task on the VGG-11 with the CIFAR-10 dataset and remove training points labeled as "8". 
The GIF is compared with the following baselines:
\vspace{-0.1cm}
\begin{itemize}[leftmargin=.8cm]
    \item \textbf{Freezing, inspired by \citet{guo2021-fastif}}: 
    One viable approach involves the selective freezing of parameters $\theta_j,~j\notin J$, computing the influence through \eqref{eq:taylor_derivation_influence}.
    This approach is reminiscent of layer-wise freezing approaches in \citet{Koh2017_IF} and \citet{guo2021-fastif}.
    
    \item \textbf{Projecting, inspired by \citet{schioppa2022-scalingIF}}: 
    Influence functions for chosen parameters can also be computed by extracting indices in $J$ from the influence \eqref{eq:GIF_special_case}.
    This process is equivalent to the naive projection from the original space to parameter space $\boldthetahat_J$.

    \item \textbf{Original \citep{Koh2017_IF}}: The original influence functions introduced in Sec.~\ref{sec:Preliminaries: What is Influence Function?}.
    
    \item \textbf{Second-order \citep{Basu20-SecondIF}}: The method adopts second-order approximation of model changes to improve correlations between the estimated and actual changes, whereas ordinary influence functions utilize linear approximation. 
    
    \item \textbf{Few-shot unlearning \citep{yoon2023fewshot}}: This method removes unwanted influences of training data by using the scrubbing procedure, which randomly changes the labels of target data and relearns the network from the early checkpoint.\footnote{We directly refer to the experimental result
    as we cannot access the experimental code and network architecture. We keep all experimental setups consistent except for the network structure.
    }
\end{itemize}

\begin{table*}[h]
\centering
\caption{Benchmarks of influence functions in the data removal task.
Test \{accuracies and losses\} are evaluated on the CIFAR-10 test dataset, excluding data points labeled with ``8".
Desirable model behavior after fine-tuning should emulate the model trained from scratch without target removal data, marked as up and down arrows.
The \textbf{\textcolor{bestmark}{best}} is marked as bold blue.
}
\vspace{.1cm}
\label{tab:IF_comparison}
\adjustbox{width=1\linewidth}{
    \begin{tabular}{ccccccc}
        \toprule[1pt]
        
        Methods & MR\% & Test acc. (\%)$\uparrow$ & Test loss $\downarrow$ & Self-acc. (\%)$\downarrow$ & Self-loss$\uparrow$ & $F_1$ score$\uparrow$ \\
        
        \cmidrule[0.35pt](l{1pt}r{1pt}){1-7}
        \arrayrulecolor{lightgray}
        
        \multicolumn{1}{c|}{Before update} & \multicolumn{1}{c|}{-} &
        91.07 & 0.38 & 
        93.80 & 0.27 & - \\ 
        \cmidrule[1pt](l{1pt}r{1pt}){1-7}

        \multicolumn{1}{c|}{\multirow{3}{*}{\textbf{GIF (ours)}}}  
                              & \multicolumn{1}{c|}{5\%} & \textbf{\textcolor{bestmark}{90.13$\pm$1.60}} & \textbf{\textcolor{bestmark}{0.42$\pm$0.09}} & \textbf{\textcolor{bestmark}{0.26$\pm$0.65}} & 5.31$\pm$1.2352 & \textbf{\textcolor{bestmark}{0.9469}} \\
        \multicolumn{1}{c|}{} & \multicolumn{1}{c|}{15\%} & 89.31$\pm$1.73 & 0.44$\pm$0.10 & 0.56$\pm$0.82 & 6.70$\pm$1.2712 & 0.9410 \\
        \multicolumn{1}{c|}{} & \multicolumn{1}{c|}{30\%} & 87.93$\pm$1.50 & 0.49$\pm$0.10 & 0.76$\pm$0.87 & 8.04$\pm$0.8816 & 0.9324 \\
        \cmidrule[0.35pt](l{1pt}r{1pt}){1-7}

        \multicolumn{1}{c|}{\multirow{3}{*}{\makecell{Freezing \\ \citep{guo2021-fastif}}}}
                              & \multicolumn{1}{c|}{5\%} & 89.24$\pm$2.11 & 0.45$\pm$0.12 & 0.41$\pm$0.80 & 5.93$\pm$1.4936 & 0.9412 \\
        \multicolumn{1}{c|}{} & \multicolumn{1}{c|}{15\%} & 88.78$\pm$2.10 & 0.46$\pm$0.13 & 0.45$\pm$0.91 & 6.99$\pm$1.6772 & 0.9385 \\
        \multicolumn{1}{c|}{} & \multicolumn{1}{c|}{30\%} & 88.42$\pm$1.89 & 0.47$\pm$0.12 & 0.63$\pm$0.80 & 7.67$\pm$1.2774 & 0.9357 \\
        \cmidrule[0.35pt](l{1pt}r{1pt}){1-7}
       
        \multicolumn{1}{c|}{\multirow{3}{*}{\makecell{Projecting \\ \citep{schioppa2022-scalingIF}}}}
                              & \multicolumn{1}{c|}{5\%} & 87.95$\pm$2.65 & 0.52$\pm$0.16 & 0.72$\pm$0.77 & 7.49$\pm$0.9617 & 0.9326 \\
        \multicolumn{1}{c|}{} & \multicolumn{1}{c|}{15\%} & 87.27$\pm$1.84 & 0.54$\pm$0.12 & 1.41$\pm$1.48 & 7.99$\pm$0.8038 & 0.9258 \\
        \multicolumn{1}{c|}{} & \multicolumn{1}{c|}{30\%} & 86.89$\pm$1.95 & 0.55$\pm$0.13 & 1.31$\pm$0.98 & 8.16$\pm$0.8984 & 0.9241 \\
        \cmidrule[0.35pt](l{1pt}r{1pt}){1-7}

        \multicolumn{1}{c|}{\citet{Koh2017_IF}}
                              & \multicolumn{1}{c|}{100\%} & 86.06$\pm$2.17 & 0.60$\pm$0.14 & 1.29$\pm$0.68 & \textbf{\textcolor{bestmark}{8.66$\pm$1.0469}} & 0.9194 \\
        \cmidrule[0.35pt](l{1pt}r{1pt}){1-7}
        
        \multicolumn{1}{c|}{\citet{Basu20-SecondIF}}
                              & \multicolumn{1}{c|}{100\%} & 84.34$\pm$1.90 & 0.71$\pm$0.11 & 2.63$\pm$1.20 & 7.97$\pm$0.5917 & 0.9038 \\
        \cmidrule[0.35pt](l{1pt}r{1pt}){1-7}

        \multicolumn{1}{c|}{\makecell{\citet{yoon2023fewshot}}}
                              & - & 86.4$\pm$0.7 &       -       & 1.6$\pm$1.0 &        -        & 0.9201 \\
        \cmidrule[1pt](l{1pt}r{1pt}){1-7}
        
        \multicolumn{1}{c|}{From-scratch retraining} & \multicolumn{1}{c|}{-} &
        91.79 & 0.34 & 
        0.00 & 8.66 & 0.9572 \\ 
        \arrayrulecolor{black}
        \bottomrule[1pt] 
    \end{tabular}
}
\end{table*}

\paragraph{Results.}
Table~\ref{tab:IF_comparison} shows the evaluation results from a handful of the IF schemes and the unlearning scheme.
The GIF scheme outperforms the other two methods for the \{self, test\}-accuracy and test loss.
Especially when the modification ratio (MR) is 5\%, the \textbf{GIF} well eradicates the trained data from the network, closely approximating the retrained network.
The closest model to the retraining is obtained by the \textbf{GIF} with MR@5\%, showing \textbf{\textcolor{bestmark}{91.51\%}} and \textbf{\textcolor{bestmark}{0.00\%}} of the \{test, self\} accuracies, respectively.
The \textbf{Projecting} scheme at MR@5\% provides better self-accuracy and loss,
but the test accuracy and loss are significantly sacrificed while the \textbf{GIF} well preserves the network utility.
The \textbf{Original} and \textbf{second-order}, full parameter update schemes, show lower performance than the partial parameter update schemes.
These results align with our observation in Fig.~\ref{fig:selection criteria}.
Also, the \textbf{Few-shot unlearning} provides lower performance than the \textbf{GIF} as the scrubbing process contaminates the training dataset by randomly changing labels of target removal data.

\subsection{Performance Evaluation Against Backdoor Attack Scenario}

We further verify the GIF in a backdoor (BD) recovery scenario to ensure that the proposed scheme genuinely modifies the network, not manipulating the results.
The results from the data removal task could be fabricated as we can make a softmax output zero by allocating a large negative value to the associated bias.

\begin{table}[ht]
\vspace{0.5cm}
\begin{minipage}[t]{0.43\linewidth}
\centering
    \includegraphics[width=\linewidth]{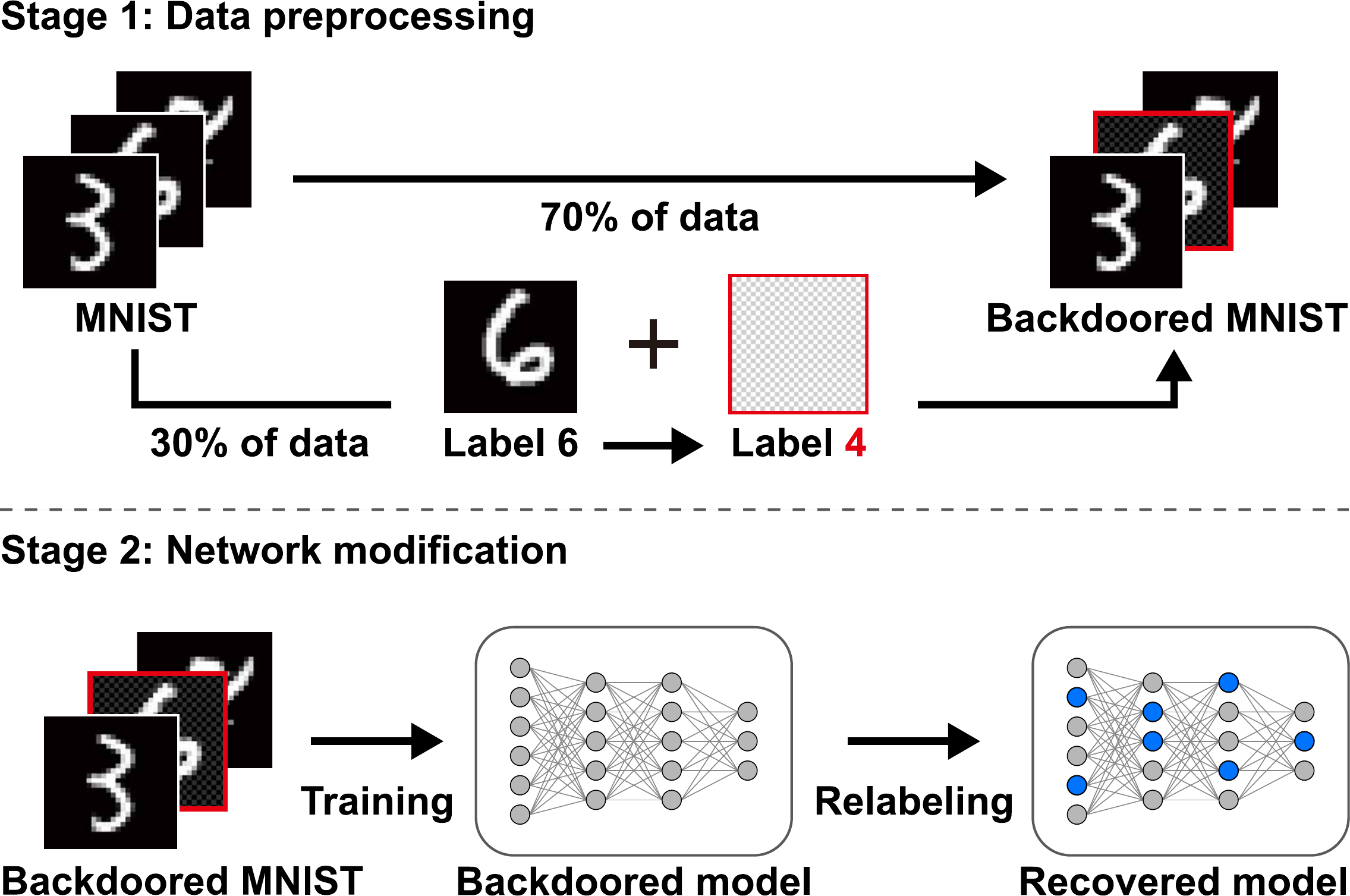}
    \vspace{0.1cm}
    \captionof{figure}{Backdoor recovery scenario. \label{fig:backdoor scenario}}
\end{minipage}
\hfill
\begin{minipage}[t]{0.55\linewidth}
    \vspace{-4.2cm}
    \centering
    \caption{Accuracies of models recovered by various influence functions. The \textbf{\textcolor{bestmark}{best}} is marked as bold blue.} 
    \vspace{0.1cm}
    \label{tab:backdoor}
    \adjustbox{width=\linewidth}{
        \begin{tabular}{cccc}
            \toprule[1pt]
            Method & Test dataset & \makecell{BD data \\ w/ true label} & \makecell{BD data w/ \\ BD label} \\
            \cmidrule[.75pt](l{1pt}r{1pt}){1-4}
            \arrayrulecolor{lightgray}
    
            BD Model & 99.61 & 9.42 & 100.00 \\
            \cmidrule[0.5pt](l{1pt}r{1pt}){1-4}

            \textbf{GIF (ours)} & 
            \textbf{\textcolor{bestmark}{98.26 $\pm$ 0.66}} & 
            \textbf{\textcolor{bestmark}{95.38 $\pm$ 0.36}} & 
            \textbf{\textcolor{bestmark}{11.48 $\pm$ 0.48}} \\
            \makecell{Freezing \\ \citet{guo2021-fastif}}  &
            91.05 $\pm$ 4.50 & 88.76 $\pm$ 5.44 & 16.15 $\pm$ 6.78 \\
            \makecell{Projecting \\ \citet{schioppa2022-scalingIF}} &
            95.78 $\pm$ 0.25 & 92.30 $\pm$ 0.97 & 11.77 $\pm$ 0.81 \\
            \citet{Koh2017_IF} & 
            97.84 $\pm$ 0.00 & 93.25 $\pm$ 0.00 & 14.22 $\pm$ 0.00 \vspace{0.2cm}\\
            \citet{Basu20-SecondIF} &
            98.23 $\pm$ 0.01 & 86.30 $\pm$ 0.02 & 21.75 $\pm$ 0.02 \\
            \cmidrule[0.5pt](l{1pt}r{1pt}){1-4}
            
            Retraining & 99.96 & 99.98 & 9.40 \\
            
            \arrayrulecolor{black}
            \bottomrule[1pt]
        \end{tabular}
    }
\end{minipage}
\end{table}

As illustrated in Fig.~\ref{fig:backdoor scenario}, the BD dataset is generated by implanting the backdoor pattern on randomly chosen data and changing the label as a backdoor label in Stage 1.
Then, the model trained with the BD dataset returns a BD label for given BD data,
but it normally operates for normal input data.
At Stage 2, if the influence functions properly recover the model, the model should generate proper outputs for both BD and non-BD data.
A canonical neural network is utilized as a BD model, which consists of four convolutional layers and one fully connected layer.

\textbf{Results.}
Table~\ref{tab:backdoor} lists the accuracy of the models in the BD scenario.
The \textbf{GIF}, \textbf{Freezing}, and \textbf{Projecting} schemes are evaluated with 5\% MR as they provide the best results.
The \textbf{GIF} successfully recovers the original model by just conducting one-shot modification on 5\% of parameters, showing that the GIF recognizes and properly modifies parameters associated with the backdoor patterns.
The \textbf{Projecting} achieves a comparable accuracy to the \textbf{GIF} for BD data with the BD label; but records lower accuracies for test data and BD data with the original label.
Also, the \textbf{Original} and \textbf{Second-order} show similar accuracy for the test dataset compared to the \textbf{GIF}, but underperform on the BD recovery task.

\subsection{Will the GIF Genuinely Update Model like From-Scratch Retrained Model?}
The GIF only updates a small model segment, leaving most parameters equal to the parameters from the original model.
However, this might contradict the nature of influence functions that make the updated model behave similarly to the model retrained from scratch with the changed dataset.
Tables~\ref{tab:IF_comparison} and \ref{tab:backdoor} indirectly show that the GIF closely follows the behavior of the \textbf{Retraining} compared to the other baselines.
In addition, we further verify how the updated model internally behaves by observing output distributions and Gradient-weighted Class Activation Mapping (Grad-CAM) visualization \citep{Ramprasaath2016-GradCam} of the updated model.

\paragraph{Inference histogram.}
We first train the VGG-11 network (\textbf{before}) with the CIFAR-10 dataset, then conduct a data removal task using the GIF.
The \textbf{Highest-$\bm{k}$ gradients} selects {5, 15, 30}\% of model parameters. The selected parameters are then updated via the GIF, denoted as \textbf{MR@\{5, 15, 30\}\%}, respectively.
Lastly, we retrain the network from scratch (\textbf{retraining}) excluding the target removal data.
Figure~\ref{fig:inference histogram} shows how often each label is inferred when deleted data is provided on each network.
Remarkably, the histogram obtained from \textbf{MR@\{5, 15, 30\}\%} shows a similar tendency with the \textbf{retraining}, showing the robustness of the partial parameter update.

\paragraph{Grad-Cam visualization.}
We visualized Grad-Cam results by feed-forwarding the removed and remaining images to the five models obtained by the same process as the inference histogram.
The fifth and sixth convolutional layers (Conv2d) are selected for analysis because gradient outputs from earlier layers provide overly detailed outputs and those from later layers are excessively coarse.
The updated models \textbf{MR@\{5, 15, 30\}\%} shows a similar heatmap to the \textbf{retraining} for the removed images.
Interestingly, these four models tend to avoid the heatmap obtained by the model \textbf{before}.
For the remaining images, the heatmap patterns from \textbf{MR@\{5, 15, 30\}\%} are more similar to the \textbf{before} than the \textbf{retraining}, as they mostly share the same parameters.
However, all patterns show a negligible discrepancy as the five models provide correct inferences for the remaining images.

\begin{figure}
    \centering
    \begin{minipage}[t]{0.43\linewidth}
    \centering
        \includegraphics[width=\linewidth]{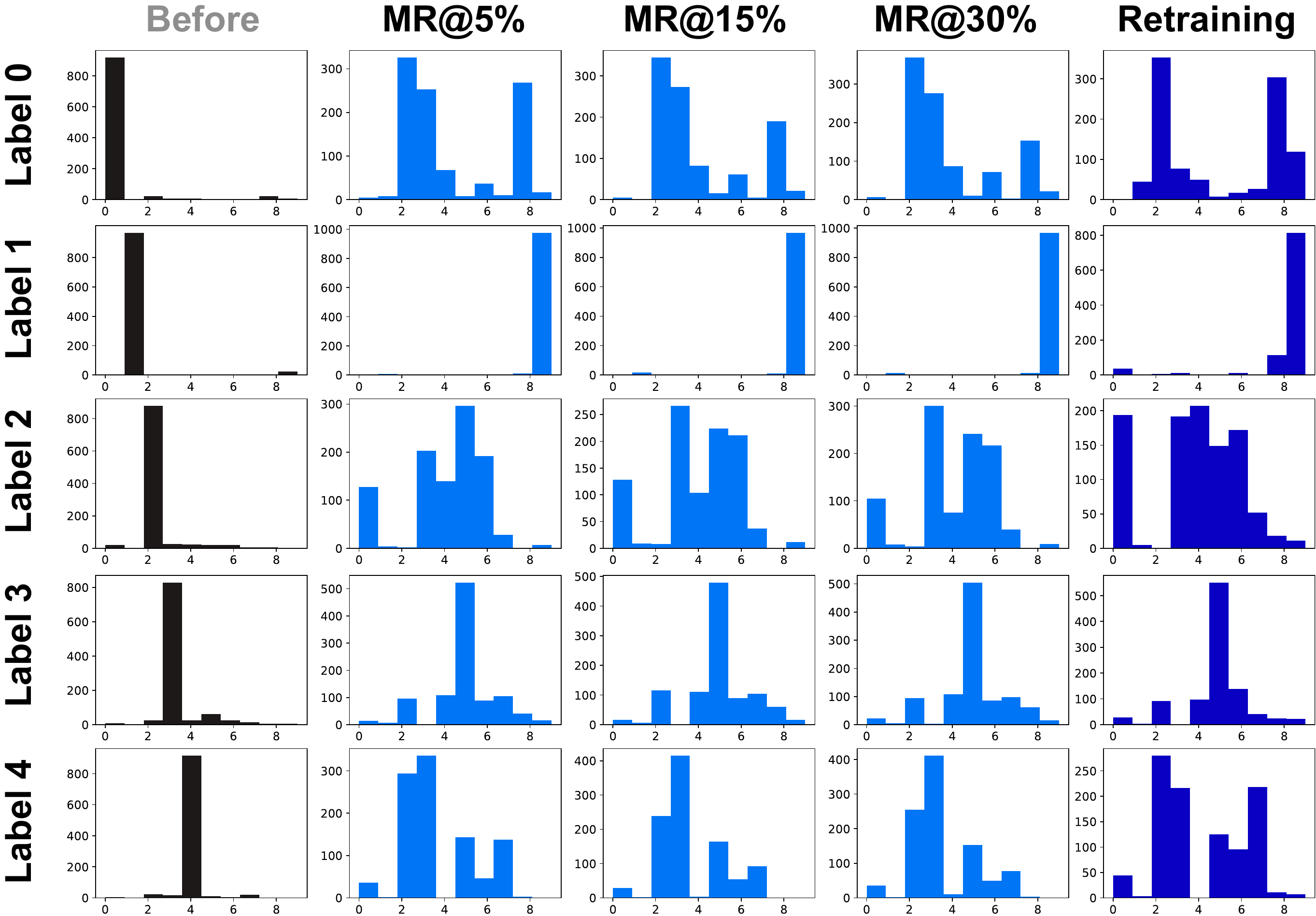}
    \end{minipage}
    \quad 
    \begin{minipage}[t]{0.43\linewidth}
    \centering
        \includegraphics[width=\linewidth]{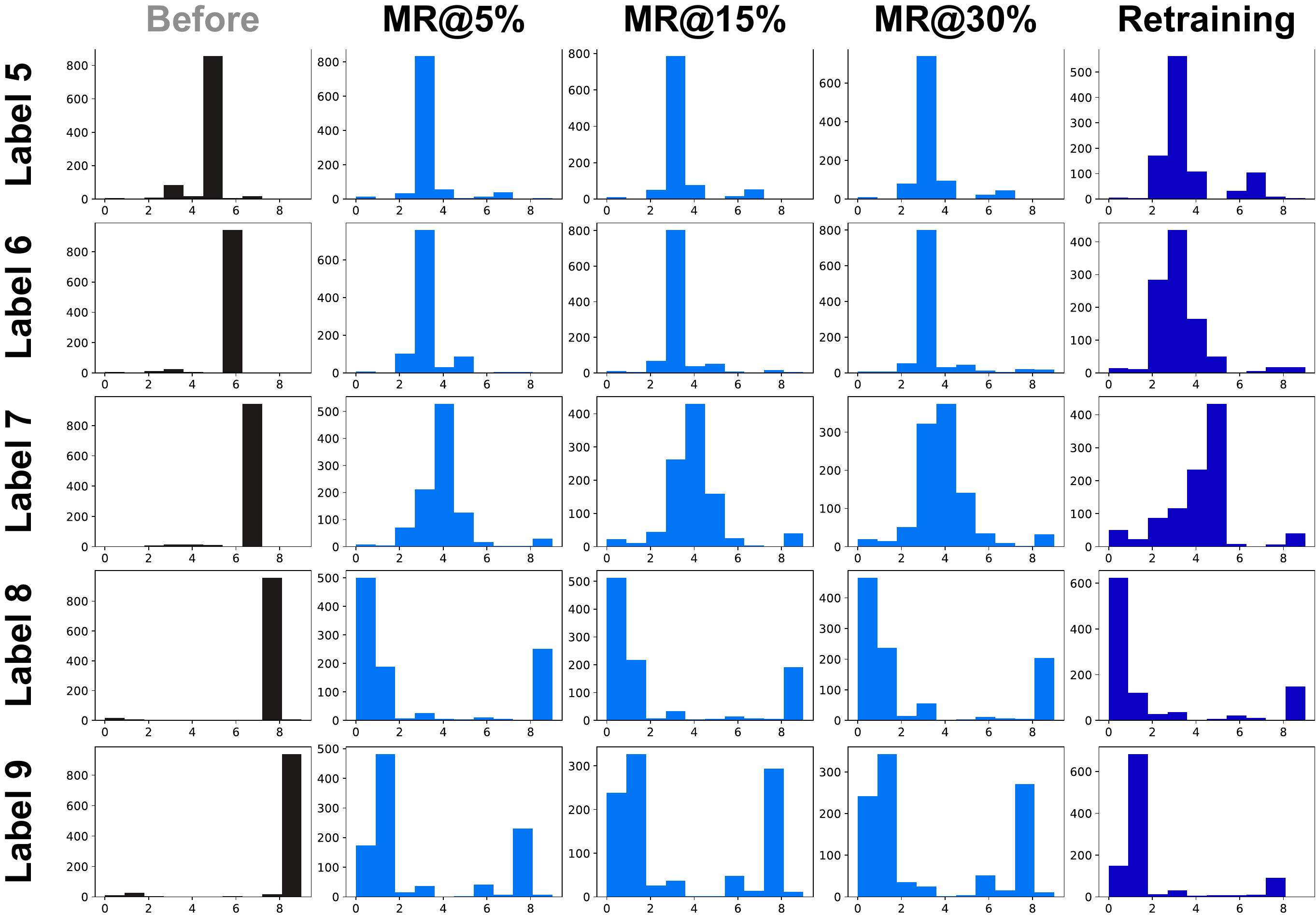}
    \end{minipage}
    \caption{Inference histogram from the five models: the original model before removal, the three models updated by the GIF, and the retrained model.}
    \label{fig:inference histogram}
\end{figure}

\begin{figure}[h]
    \centering
    \includegraphics[width=.95\linewidth]{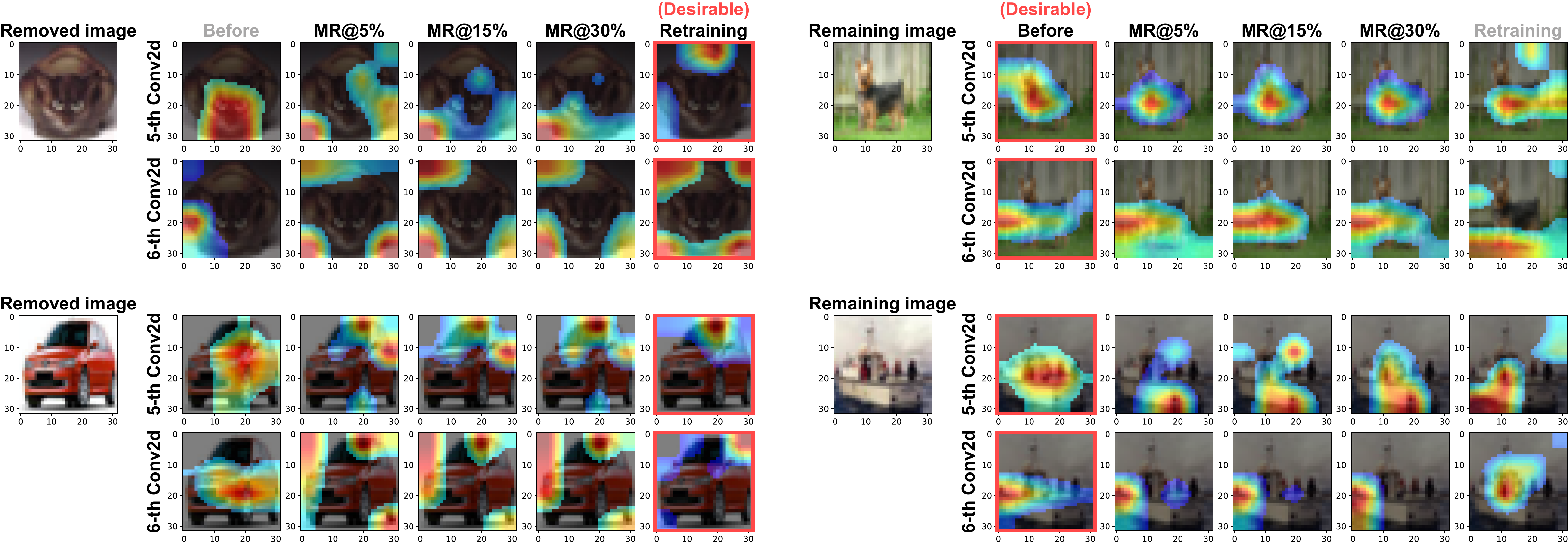}
    \caption{Grad-Cam visualization from the five models for the erased and remaining images.}
\end{figure}

\section{Limitations, Challenges, and Future Works}
\paragraph{Limitations: Verification in large models.}
Though the GIFs are verified to provide more accurate and reliable influence estimation than the previous works,
this work does not yield results for larger models due to hardware constraints. 
Influence functions require substantial memory and computational overhead due to inverse Hessian operations. 
To date, few studies have been proposed to accelerate the computation of influence functions \cite{kwon2024datainf}. 
The absence of efficient acceleration methods largely hinders the widespread adoption of influence functions.

\paragraph{Challenges: How to accelerate influence functions?}
Essentially, influence functions are closely related to the model fine-tuning as both require a well-trained model.
Thus, low-rank adaptation or quantization might be adopted to accelerate influence computation.
Moreover, the influence of data might be computed by only using the gradient without computing the Hessian, derivatives of gradients.
An early attempt has been empirically made by considering the Hessian as identity \cite{schioppa2022-scalingIF}, but deeper analysis would be required.

Continued research will focus on enhancing the efficiency and accuracy of influence functions, as well as gaining a comprehensive understanding of the nature of network behaviors. 
This endeavor aims to provide us with complete accessibility to AI models.


\bibliography{references}
\bibliographystyle{icml2024}

\appendix
\onecolumn

\section{Studies on the regularization weight for the LiSSA iteration}
\label{Appendix:LiSSA iteration}
\begin{figure}[h]
    \centering
    \includegraphics[width=.6\columnwidth]{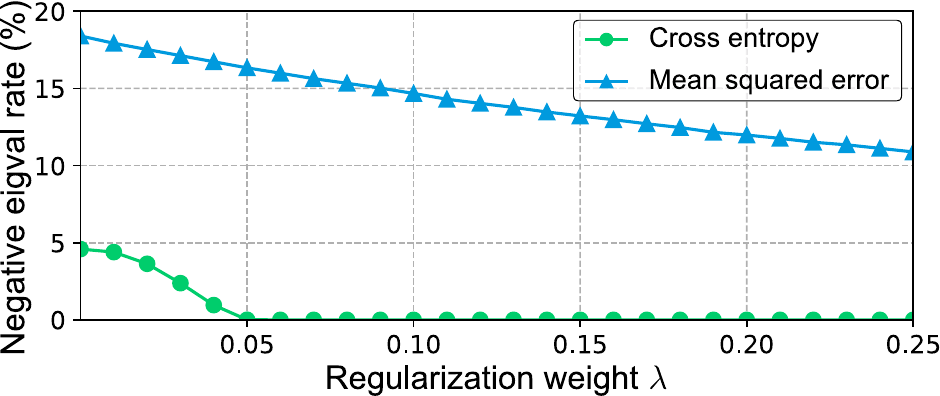}
    \caption{Percentage of negative eigenvalues vs. $L_2$ regularization weight $\lambda$ for the MNIST classification task on the fully-connected network.
    }
    \label{fig:neg_eigval_rate}
\end{figure}

Figure~\ref{fig:neg_eigval_rate} shows the percentage of negative eigenvalues among the top 500 eigenvalues with the largest absolute values in the Hessian matrix, obtained from the Lanczos algorithm \cite{Lanczos1950-eigval_approximation}.
The negative eigenvalue rate for the cross-entropy implies that the Hessian matrix is positive semi-definite for large enough $\lambda$.
However, there is no way to pre-determine the proper $\lambda$ for a given network.
Moreover, such $\lambda$ may exist outside of the viable range as in the case of the mean squared error.

\section{Detailed Explanations on toy example} 
\label{Appendix:toy example}
Figure \ref{fig:toy example} illustrates how the first two weights of the linear regression model change when the 200 heterogeneous data points are removed\footnote{1000 data points are artificially manipulated by a linear mapping from $\mathbb{R}^5$ to $\mathbb{R}$. 
$20\%$ of data points are given heterogeneous properties by amplifying and flipping their sizes and signs, respectively.
}. 
The other three weights and the bias are supposed to be fixed to observe the effect on the partial influence of the data removal.
The weights are properly updated close to the optimum by the GIF, but none of the cases are for the Freezing and Projecting. 

These misdirected updates are mathematically inevitable in the Freezing and Projecting schemes.
The derivative of Freezing can be represented as
\begin{align}
     0 \approx \epsilon \grad_{\boldtheta_J}\ell\big(z,\boldthetahat\big) +
     \begin{bmatrix}
         \mathbf{H}_{J\times J} & \mathbf{0} \vspace{0.1cm} \\ 
         \mathbf{0} & \mathbf{0}
     \end{bmatrix}
     \begin{bmatrix}
         \Delta_\epsilon \vspace{0.cm}\\
         \mathbf{0}
     \end{bmatrix},
     \label{eq:Freezing}
\end{align}
where $\mathbf{H}_{J \times J} \in \mathbb{R}^{k \times k}$.
The matrix $\mathbf{H}_{J \times J}$ is essentially $\mathbf{H}_J$ with all rows where the index $j>k$ are eliminated.
As a consequence, the loss of information within the gradients and Hessians leads to misdirected updates.

The Projecting generates the wrong update direction because matrix $\mathbf{H_\boldthetahat}$ is non-orthogonal.
We note that $\mathbf{H}_J\big(\mathbf{H}_J^\mathsf{T} \mathbf{H}_J \big)^{-1} \mathbf{H}_J^\mathsf{T}$ is a projection matrix, then $\mathcal{I}\big( z, \boldthetahat_J | \boldthetahat \big)$ can be considered as projection coefficients.
Then, the Projecting attempts to use the coefficients from another space (column space of $\mathbf{H}$) to properly project $\grad \ell(z,\boldthetahat)$ onto the column space of $\mathbf{H}_J$.
This only works when $\mathbf{H}$ is orthogonal, which is rarely the case.

\section{Derivation of Generalized Influence Function
\label{Appendix:Derivation of GIF}
}

Let $\boldthetahat(\epsilon, \bm{w})$ be a minimizer of $\mathcal{L}(\boldtheta) + \epsilon\sum_{i=1}^m w_i \ell(z_i, \boldtheta)$.
Then, the first-order optimality condition on the empirical risk gives

\begin{align}
    0 \approx 
    {\hfsetfillcolor{FFFBCB}\hfsetbordercolor{FFFBCB} 
    \tikzmarkin[disable rounded corners=true]{equation_1}(0,-0.2)(0,0.4) 
    \grad \mathcal{L}\big(\boldthetahat(\epsilon, \bm{w})\big)
    \tikzmarkend{equation_1}}
    + 
    {\hfsetfillcolor{D0E9FF}\hfsetbordercolor{D0E9FF} 
    \tikzmarkin[disable rounded corners=true]{equation_2}(0,-0.5)(0,0.6) 
    \epsilon \sum_{i=1}^m w_i \grad \ell \big(z_i, \boldthetahat(\epsilon, \bm{w})\big).
    \tikzmarkend{equation_2}}
    \label{eq:group_influence_derivation_first_optimality}
\end{align}
The first-order Taylor expansion of \eqref{eq:group_influence_derivation_first_optimality} 
anchored on $\boldthetahat = \mathrm{argmin}_{\boldtheta\in \Theta}\mathcal{L}(\boldtheta)$, is
\begin{align}
    0 \approx & ~
    {\hfsetfillcolor{FFFBCB}\hfsetbordercolor{FFFBCB} 
    \tikzmarkin[disable rounded corners=true]{equation_3}(0,-0.5)(0,0.6) 
    \grad \mathcal{L} \big(\boldthetahat\big) 
    + \grad^2 \mathcal{L} \big(\boldthetahat\big)
    \left(
    \begin{bmatrix} \boldthetahat_J(\epsilon, \bm{w}) \\ \boldthetahat_{-J} \end{bmatrix} -
    \begin{bmatrix} \boldthetahat_J \\ \boldthetahat_{-J} \end{bmatrix}
    \right)
    \tikzmarkend{equation_3}}
    \nonumber \\
    & ~ 
    {\hfsetfillcolor{D0E9FF}\hfsetbordercolor{D0E9FF} 
    \tikzmarkin[disable rounded corners=true]{equation_4}(0,-0.5)(0,0.6) 
    + \epsilon \sum_{i=1}^m w_i \grad \ell \big(z_i, \boldthetahat\big)
    + \epsilon \sum_{i=1}^m w_i \grad^2 \ell \big(z_i, \boldthetahat\big) 
    \left(
    \begin{bmatrix} \boldthetahat_J(\epsilon, \bm{w}) \\ \boldthetahat_{-J} \end{bmatrix} -
    \begin{bmatrix} \boldthetahat_J \\ \boldthetahat_{-J} \end{bmatrix}
    \right) 
    \tikzmarkend{equation_4}}
    \label{eq:appendix_GIF_derivation}\\
    = & ~ \epsilon \sum_{i=1}^m w_i \grad \ell \big(z_i, \boldthetahat\big)
    + \sum_{i=1}^m (1+\epsilon w_i) \grad^2 \ell \big(z_i, \boldthetahat\big) \Delta_{\epsilon} \\
    = & ~ \epsilon \sum_{i=1}^m w_i \grad \ell \big(z_i, \boldthetahat\big)
    + \mathbf{H}_J\Delta_{\epsilon}.
    \label{eq:group_IF}
\end{align}
We apply the first-order optimality condition $\grad \mathcal{L}\big(\boldthetahat \big) \approx 0$ as $\boldthetahat$ to \eqref{eq:appendix_GIF_derivation} as $\grad \mathcal{L}\big(\boldthetahat \big)$ is a minimizer of $\mathcal{L}(\cdot)$.

The above equation is an overdetermined linear problem, which contains no solution.
Instead, we find $\Delta_{\epsilon}$ that minimizes the mean squared error, which can be defined as
\begin{equation}
    \Delta_{\epsilon} = \mathrm{argmin}_{\Delta\in\mathbb{R}^{k}} \|f(\Delta)\|_2^2,
\end{equation}
where $f(\Delta)=\epsilon \sum_{i=1}^m w_i \grad \ell \big(z_i, \boldthetahat\big) + \mathbf{H}_J\Delta$.
Then, the solution is provided as \eqref{def:GIF}.

\section{Derivation of Modified LiSSA Iteration}
We can derive the series \eqref{eq:approximate_series_GIF} by simply applying the Neuman series to the GIF.

\begin{theorem}[Neuman series]
   If $\rho(\mathbf{A})<1$ for a square matrix $\mathbf{A}$,
   the series $\mathbf{A}_k = \sum_{i=0}^k (\mathbf{I}-\mathbf{A})^i$ converges to $\mathbf{A}^{-1}$ as $k\rightarrow\infty$.
   That is, $\mathbf{A}^{-1} = \sum_{i=0}^\infty (\mathbf{I}-\mathbf{A})^i$.
\end{theorem}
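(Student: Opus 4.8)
The plan is to reduce the claim to the classical Neumann-series identity by the substitution $\mathbf{B} := \mathbf{I} - \mathbf{A}$, so that $\mathbf{A} = \mathbf{I} - \mathbf{B}$ and the partial sum in the statement becomes $\mathbf{A}_k = \sum_{i=0}^k \mathbf{B}^i$. The convergence then hinges on the spectral radius of $\mathbf{B}$, i.e. on the operative condition $\rho(\mathbf{I}-\mathbf{A})<1$ --- precisely the spectral-radius hypothesis invoked for the iterations \eqref{eq:approximate_series_IF} and \eqref{eq:approximate_series_GIF}. Under this condition I would assemble the proof from three ingredients: (i) $\mathbf{A}$ is invertible, (ii) a telescoping identity relating $\mathbf{A}\,\mathbf{A}_k$ to a single high power of $\mathbf{B}$, and (iii) the vanishing of that power as $k\to\infty$.

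For (i), note that the eigenvalues of $\mathbf{A} = \mathbf{I}-\mathbf{B}$ are exactly $1-\mu$ as $\mu$ ranges over the spectrum of $\mathbf{B}$; since $|\mu|\le\rho(\mathbf{B})<1$ forces $\mu\neq 1$, every eigenvalue $1-\mu$ of $\mathbf{A}$ is nonzero, so $\det\mathbf{A}\neq 0$ and $\mathbf{A}^{-1}$ exists. For (ii), I would expand the product and let the sum collapse:
\begin{align}
    \mathbf{A}\,\mathbf{A}_k
    = (\mathbf{I}-\mathbf{B})\sum_{i=0}^k \mathbf{B}^i
    = \sum_{i=0}^k \mathbf{B}^i - \sum_{i=0}^k \mathbf{B}^{i+1}
    = \mathbf{I} - \mathbf{B}^{k+1}
    = \mathbf{I} - (\mathbf{I}-\mathbf{A})^{k+1}.
    \label{eq:neumann_telescope}
\end{align}
Left-multiplying \eqref{eq:neumann_telescope} by $\mathbf{A}^{-1}$ gives the closed form $\mathbf{A}_k = \mathbf{A}^{-1}\big(\mathbf{I} - (\mathbf{I}-\mathbf{A})^{k+1}\big)$, reducing the whole statement to controlling $(\mathbf{I}-\mathbf{A})^{k+1}$.

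The main obstacle is ingredient (iii): showing $\mathbf{B}^{k}\to\mathbf{0}$ from $\rho(\mathbf{B})<1$, since a spectral-radius bound does not immediately bound the norm of the powers (that norm can grow transiently for non-normal $\mathbf{B}$). I would handle it by one of two standard routes. The cleanest is to invoke that for every $\varepsilon>0$ there is a submultiplicative operator norm $\|\cdot\|_\varepsilon$ with $\|\mathbf{B}\|_\varepsilon \le \rho(\mathbf{B})+\varepsilon$; choosing $\varepsilon$ so that $\rho(\mathbf{B})+\varepsilon<1$ yields $\|\mathbf{B}^{k}\|_\varepsilon \le \|\mathbf{B}\|_\varepsilon^{k}\to 0$, and equivalence of norms on $\mathbb{R}^{n\times n}$ then gives entrywise convergence $\mathbf{B}^{k}\to\mathbf{0}$. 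Alternatively, writing $\mathbf{B}=\mathbf{P}\mathbf{J}\mathbf{P}^{-1}$ in Jordan form, each block of $\mathbf{J}^{k}$ has entries of the form $\binom{k}{j}\mu^{k-j}$, which decay because the geometric factor $|\mu|^{k}<1$ dominates the polynomial factor. Feeding $\mathbf{B}^{k+1}\to\mathbf{0}$ back into the closed form yields $\mathbf{A}_k\to\mathbf{A}^{-1}$, completing the argument; taking $\mathbf{A}=\mathbf{H}_J^\mathsf{T}\mathbf{H}_J$ then recovers the convergence of \eqref{eq:approximate_series_GIF} claimed in the text.
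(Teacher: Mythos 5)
Your proposal is correct, but there is no paper proof to compare it against: the paper states this Neumann-series theorem without proof, quoting it as a classical fact and using it only to obtain the incremental recursion \eqref{eq:approximate_series_GIF} via the identity $\mathbf{A}_k = \mathbf{I} + (\mathbf{I}-\mathbf{A})\mathbf{A}_{k-1}$. Your argument --- substitute $\mathbf{B}=\mathbf{I}-\mathbf{A}$, get invertibility of $\mathbf{A}$ from the spectral map $\mu \mapsto 1-\mu$, telescope to $\mathbf{A}\,\mathbf{A}_k = \mathbf{I}-\mathbf{B}^{k+1}$, and drive $\mathbf{B}^{k+1}\to\mathbf{0}$ either through a submultiplicative norm with $\|\mathbf{B}\|_\varepsilon \le \rho(\mathbf{B})+\varepsilon$ or through the Jordan form --- is the standard, complete proof that the paper leaves implicit, and you correctly isolate step (iii) as the only delicate point: a spectral-radius bound does not bound powers of a non-normal matrix uniformly, and either of your two routes closes that gap.

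One thing you handled silently deserves to be said out loud: the hypothesis as printed, $\rho(\mathbf{A})<1$, is not the right one, and under it the statement is false for general square matrices --- $\mathbf{A}=\mathbf{0}$ has $\rho(\mathbf{A})=0<1$ but is singular, and $\mathbf{A}=-\frac{1}{2}\mathbf{I}$ has $\rho(\mathbf{A})=\frac{1}{2}$ while $\rho(\mathbf{I}-\mathbf{A})=\frac{3}{2}$, so $\sum_i (\mathbf{I}-\mathbf{A})^i$ diverges. The operative condition, which your proof actually uses, is $\rho(\mathbf{I}-\mathbf{A})<1$; this matches the convergence conditions stated elsewhere in the paper, namely $\rho(\mathbf{I}-\mathbf{H}_{\boldthetahat})<1$ for \eqref{eq:approximate_series_IF}, $\rho(\mathbf{I}-\mathbf{H}_J^\mathsf{T}\mathbf{H}_J)<1$ for \eqref{eq:approximate_series_GIF}, and the bound $0<\rho\big(\mathbf{I}-\frac{1}{M^2}\mathbf{H}_J^\mathsf{T}\mathbf{H}_J\big)<1$ established in the proof of Proposition~\ref{Proposition 1:GIF approximation}. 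In the paper's application $\mathbf{A}=\frac{1}{M^2}\mathbf{H}_J^\mathsf{T}\mathbf{H}_J$ is symmetric positive definite with spectrum in $(0,1)$, where $\rho(\mathbf{A})<1$ and $\rho(\mathbf{I}-\mathbf{A})<1$ coincide, so the slip in the theorem statement is harmless there --- but your corrected hypothesis is the one under which the theorem is actually true, and your write-up should state it explicitly rather than inherit the typo.
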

Also, $\mathbf{A}_k$ can be incrementally defined as
\begin{align}
    \mathbf{A}_k & = \mathbf{I} + \sum_{i=1}^k (\mathbf{I}-\mathbf{A})^i \\
    & = \mathbf{I} + (\mathbf{I}-\mathbf{A})\mathbf{A}_{k-1}.
    \label{eq:modified LiSSA derivation}
\end{align}
Then, putting $\mathbf{A}=\mathbf{H}_J^{\mathsf{T}}\mathbf{H}_J$ and multiplying $\mathcal{I}_0 = \mathbf{H}_J^\mathsf{T} \left[ {\sum_{i=1}^m w_i \grad\ell\big( z_i,\boldthetahat \big)} \right]$ to the both side of \eqref{eq:modified LiSSA derivation} result in the series \eqref{eq:approximate_series_GIF}.

\section{Proof of Proposition~\ref{Proposition 1:GIF approximation}
\label{Appendix:Proof of Prop. 1}
}
We first show the scale-invariance of the GIF.
For $M>0$, we define the downscaled loss function and empirical risk as $\frac{1}{M}\ell\big(z, \boldthetahat \big)$ and $\frac{1}{M}\mathcal{L} \big( \boldthetahat \big)$.
For a given index set $J$, the GIF of the newly defined empirical risk is represented as
\begin{align}
    &\left( \frac{1}{M}\mathbf{H}_J^\mathsf{T} \frac{1}{M}\mathbf{H}_J \right)^{-1} 
    \frac{1}{M}\mathbf{H}_J^\mathsf{T} \frac{1}{M} \grad \ell \big({z, \boldthetahat}\big) \\
    &= 
    \left( \mathbf{H}_J^\mathsf{T} \mathbf{H}_J \right)^{-1}
    \mathbf{H}_J^\mathsf{T} \grad \ell \big({z, \boldthetahat}\big)
    = \mathcal{I}(z, \boldthetahat_J|\boldthetahat).
\end{align}
Thus, the influence function is scale-invariant on the empirical risk.

The series \eqref{eq:approximate_series_GIF} for the $\frac{1}{M}$-scaled loss converges only if the spectral radious of $I-\frac{1}{M^2}\mathbf{H}_J^\mathsf{T}\mathbf{H}_J$ is less than 1.
We prove the second argument in Prop.~\ref{Proposition 1:GIF approximation} by showing that there exists a constant $M$, which makes matrix $I-\frac{1}{M^2}\mathbf{H}_J^\mathsf{T}\mathbf{H}_J$ positive definite with a spectral radius less than 1.

Let constant $M = \big( \rho(\mathbf{H}_J^\mathsf{T}\mathbf{H}_J)+1 \big)^{\frac{1}{2}}$.
Because $\mathbf{H}_J^\mathsf{T}\mathbf{H}_J$ is positive definite matrix, the spectral radius of $\frac{1}{M^2}\mathbf{H}_J^\mathsf{T}\mathbf{H}_J$ is 
\begin{align}
    0 < \rho \left(\frac{1}{M^2}\mathbf{H}_J^\mathsf{T}\mathbf{H}_J\right)
    = \frac{\rho \left( \mathbf{H}_J^\mathsf{T}\mathbf{H}_J \right)}
    {\rho \left( \mathbf{H}_J^\mathsf{T}\mathbf{H}_J \right)+1}
    < 1,
    \label{eq:spectral_radious}
\end{align}
which corresponds to
\begin{align}
    0 < \rho \left(I-\frac{1}{M^2}\mathbf{H}_J^\mathsf{T}\mathbf{H}_J\right)
    = \frac{1}
    {\rho \left( \mathbf{H}_J^\mathsf{T}\mathbf{H}_J \right)+1}
    < 1.
\end{align}

Then, the Neumann series 
$
    \lim_{k\rightarrow \infty}
    \sum_{i=0}^k \big( \mathbf{I} - \frac{1}{M^2}\mathbf{H}_J^\mathsf{T}\mathbf{H}_J \big)^i
$
converges to $\big( \frac{1}{M^2}\mathbf{H}_J^\mathsf{T}\mathbf{H}_J \big)^{-1}$.

\section{Pythonic Implementation of Alg.~\ref{alg:GIF}
\label{Appendix:python_algorithm}
}

Variables in the \texttt{compute\_GIF}$(\cdot)$ are matched with Alg.~\ref{alg:GIF} as follows: \texttt{total\_loss} for $\mathcal{L}(\boldthetahat)$; \texttt{target\_loss} for $\mathcal{L}_{\bm{w}}(\boldthetahat)$; \texttt{parser} for the selection criterion; and \texttt{parameter\_indices} for the parameter set $J$.
The \texttt{parser} internally implements the \texttt{hook} function in pytorch, to generate the network indices during the feed-forward (or back-propagation) stage. \\
Function \texttt{\_GIF}$(\cdot)$ implements the series \eqref{eq:approximate_series_GIF}.
In real implementations, computing $\mathcal{I}_{k-1}-\mathbf{H}_J^{\mathsf{T}}\mathbf{H}_J\mathcal{I}_{k-1}$ is more efficient than computing $(I-\mathbf{H}_J^{\mathsf{T}}\mathbf{H}_J)\mathcal{I}_{k-1}$, as in lines 21-24.
The hessian-vector products are applied twice to compute $\mathbf{H}_J^{\mathsf{T}}\mathbf{H}_J\mathcal{I}_{k-1}$.
One can additionally divide the two losses with some number if the series does not converge.
Function \texttt{hvp} and \texttt{grad} computes the hessian-vector product; and the gradient of a given loss, respectively.

\begin{tabular}{c}
    \begin{lstlisting}[language=Python, caption={Pythonic pseudo-code of the GIF.}, label = {list:Pythonic pseudo-code to implement the GIF.} ]
def compute_GIF(net, total_data, target_data):
    total_loss = criterion(net, total_data)
    parser = selection(net, num_param)
    target_loss = criterion(net, target_data)   
    parameter_indices = parser.get_parameter_indices()
    
    return _GIF(net, total_loss, target_loss, parameter_indices)

def _GIF(net, total_loss, target_loss, parameter_indices):
    # I_0 in Eq. (12).
    v = grad(net, target_loss)
    GIF_0 = hvp(net, total_loss, v)[parameter_indices]

    # Repeat until the series (12) converges.
    while l2_norm(GIF_new - GIF_old) > tol:
        GIF_old = GIF_new
        GIF_new = GIF_0 + GIF_old \
                  - hvp(net, total_loss, hvp(net, total_loss, GIF_old)[parameter_indices]
    return GIF
    \end{lstlisting}
\end{tabular}

\section{Experimental Details for Reproduction}
\label{Appendix:Experimental Details}

All experiments are conducted on the Linux workstation with AMD Ryzen\texttrademark 7 5800X 8-Core Processor CPU @ 3.70GHz and one NVIDIA Geforce RTX 3080Ti.
Our implementation code to reproduce the results is available at \textcolor{blue}{\href{https://github.com/hslyu/dbspf}{https://github.com/hslyu/dbspf}}.

\subsection{Network Preparation}


\paragraph{VGG-11 with CIFAR-10 dataset}
We adopted the same setting as ResNet-18, except for the following differences.
The CIFAR-10 dataset is normalized as $\mu = (0.4914, 0.4822, 0.4465)$ and $\sigma = (0.2023, 0.1994, 0.2010)$.
We adopt a cosine annealing learning rate scheduler with a maximum iteration of 200.
After training, the test accuracy is 91.50\%.


\paragraph{Custom CNN with BD MNIST dataset}
We used the same configurations for Table.~\ref{tab:backdoor}, except for the scheduler and MNIST data preparation.
30\% of data is sampled by using numpy.choice($\cdot$) method with seed 1.
The 30\% of data, so-called BD data, is implanted check-pattern array that alters 0 and 16.
To prevent overflow of MNIST data formatted by uint8, we clipped the implanted image array elements into [0, 255].
The model is trained for 20 epochs, with the learning rate scheduled by a step scheduler that reduces the learning rate by a factor of 0.1 at the 10th and 15th epochs.
We use two CNNs: one CNN is trained with BD data and BD label, and the other is trained with BD data and original label.
After training, the test accuracies of the two models are 98.96\% and 98.98\%, respectively.

\subsection{Data Sampling}
We sample the dataset twice to compute two losses; one for hessian $\mathbf{H}_J$ and the other for gradient $\sum_{i=1}^m w_i \grad \ell(z_i,\boldthetahat)$.
The data sampling approach draws inspiration from second-order Hessian computation \citep{Agarwal2017-HessianEstimator} and is in line with the research on the acceleration of influence functions  \citep{guo2021-fastif}.

\subsection{Differences Between Theoretic and Empirical Update Strategies}
\label{Appendix:Differences between theoretic and empirical update strategies}

Influence functions measure infinitesimal parameter changes, so influence functions need to be extrapolated to estimate actual changes in the model.
When the data $z$ is removed from the model, variable $\epsilon$ should be configured as $\epsilon=-\frac{1}{m}$ in the $\epsilon$-upweighted empirical risk minimization problem
$\boldthetahat(\epsilon,z)=\mathrm{argmin}_{\boldtheta\in \Theta} \frac{1}{m} \sum_{i=1}^m \ell(z_i,\boldtheta) +\epsilon\ell(z,\boldtheta)
$.
Then, from Eq.~\eqref{eq:taylor_derivation_influence}, the model changes $\Delta_\epsilon$ can be computed as
$\Delta_\epsilon \approx \mathbf{H}^{-1}\grad\ell\big(z,\boldthetahat\big) \epsilon = \mathcal{I}(z,\boldthetahat)\epsilon$ \citep{Koh2017_IF}.

When measuring the group influence, the $\epsilon$-upweighted empirical risk minimization problem is configured as $\boldthetahat(\epsilon, \bm{w})=\mathrm{argmin}_{\boldtheta\in \Theta} \frac{1}{m} \sum_{i=1}^m \ell(z_i,\boldtheta) +\epsilon \frac{1}{\|\bm{w}\|_1}\sum_{i=1}^m w_i\ell(z,\boldtheta)
$.
Then, variable $\epsilon$ needs to be set as $\epsilon=-\frac{\|\bm{w}\|_1}{m}$ to approximate the model changes in the group data removal scenario.

These theoretical $\epsilon$ settings are known to underestimate the actual changes in the real implementations \citep{Koh2019_GIF}.
Such properties are coherently observed in our implementation, making it challenging to determine the appropriate $\epsilon$ that most accurately estimates the behavior of the retrained model.

We empirically determine the update weight $\epsilon$ rather than using the theoretical value for the above reason.
We first normalize the influence function and update the network with the update rate $\gamma$, which can be represented as
\begin{align}
    \boldtheta_J \leftarrow \boldtheta_J + \gamma \frac{\mathcal{I}(z,\boldthetahat_J | \boldthetahat)~~~ }{ \|\mathcal{I}(z,\boldthetahat_J | \boldthetahat)\|_{2}}
\end{align}
with initial $\boldtheta_J = \boldthetahat_J$.
Then, we update the model repetitively with $\gamma=0.03$ until we get the best $F_1$ score.

\end{document}